\icmltitlerunning{Unbiased Loss Functions for Extreme Classification With Missing Labels}
\newcommand{\set}[1]{\left\{#1\right\}}
\newcommand{\real}{\mathds{R}}
\newcommand{\defmap}[3]{#1: #2 \longrightarrow #3}
\DeclareMathOperator{\expect}{\mathds{E}}
\newcommand{\prob}[1]{\mathcal{P}\!\left\{ #1 \right\}}
\newcommand{\ind}{\mathds{1}}
\newcommand{\parabel}{\texttt{Parabel}\xspace}
\newcommand{\ndcg}[1]{\ensuremath{\mathrm{nDCG}@#1}}
\newcommand{\dcg}[1]{\ensuremath{\mathrm{DCG}@#1}}
\newcommand{\psp}[1]{\ensuremath{\mathrm{PSP}@#1}}
\newcommand{\p}[1]{\ensuremath{\mathrm{P}@#1}}
\DeclareMathOperator{\rank}{rank}
\newcommand{\yv}{\mathbf{y}}
\newcommand{\yhatv}{\mathbf{\hat{y}}}
\newcommand{\ndcgatk}{\ensuremath{\ndcg{k}}\xspace}%
\newcommand{\opl}{l^*_{+}}
\newcommand{\oml}{l^*_{-}}
\newcommand{\ppart}[1]{\max\left(#1, 0\right)}
\newcommand{\numlabels}{N}
\newtheorem{theorem}{Theorem}
\newtheorem{corollary}{Corollary}
\newcommand{\detail}[1]{\ifthenelse{\boolean{detailed}}{#1}{}}
\newcommand{\detailcolor}{}
\newcommand{\detailalt}[2]{\ifthenelse{\boolean{detailed}}{#2}{#1}}
\newcommand{\detaildisplay}[1]{
\ifthenelse{\boolean{detailed}}
{\begin{align}
	#1
\end{align}}{$#1$}
}
\begin{document}

\twocolumn[
\icmltitle{Unbiased Loss Functions for Extreme Classification With Missing Labels}




\begin{icmlauthorlist}
\icmlauthor{Erik Schultheis}{to}
\icmlauthor{Mohammadreza Qaraei}{to}
\icmlauthor{Priyanshu Gupta}{goo}
\icmlauthor{Rohit Babbar}{to}
\end{icmlauthorlist}

\icmlaffiliation{to}{Aalto University, Helsinki, Finland}
\icmlaffiliation{goo}{IIT, Kanpur, India}

\icmlcorrespondingauthor{Rohit Babbar}{firstname.lastname@aalto.fi}


\vskip 0.3in
]



\printAffiliationsAndNotice{}  

\begin{abstract}
The goal in extreme multi-label classification (\textbf{XMC}) is to tag an instance with a small subset of relevant labels from an extremely large set of possible labels. 
In addition to the computational burden arising from large number of training instances, features and labels, problems in XMC are faced with two statistical challenges, (i) large number of \lq tail-labels\rq -- those which occur very infrequently, and (ii) missing labels as it is virtually impossible to manually assign \textit{every} relevant label to an instance. 
In this work, we derive an unbiased estimator for general formulation of loss functions which decompose over labels, and then infer the forms for commonly used loss functions such as hinge- and squared-hinge-loss and binary cross-entropy loss.
We show that the derived unbiased estimators, in the form of appropriate weighting factors, can be easily incorporated in state-of-the-art algorithms for extreme classification, thereby scaling to datasets with hundreds of thousand labels.
However, empirically, we find a slightly altered version that gives more relative weight to tail labels to perform even better. We suspect is due to the label imbalance in the dataset, which is not explicitly addressed by our theoretically derived estimator.
Minimizing the proposed loss functions leads to significant improvement over existing methods (up to 20\% in some cases) on benchmark datasets in XMC.
\end{abstract}

\section{Introduction}
Extreme Multi-label Classification (\textbf{XMC}) refers to supervised learning where each training/test instance is labeled with small subset of relevant labels that are chosen from a large set of possible target labels.
Problems consisting of extremely large number of labels are common in various domains such as annotating large encyclopedia \citep{dekel2010multiclass, partalas2015lshtc}, image-classification \citep{deng2010does} and next word prediction \citep{mikolov2013efficient}.
It has also been noticed that the framework of XMC can be effectively leveraged to address learning problems arising in recommendation systems and web-advertising \citep{Agrawal13, prabhu2014fastxml, jain2019slice}.
For the case of recommendation systems, by learning from other similar users' buying patterns, this framework can be used to recommend a small subset of relevant items from a large collection of all possible items. 

With applications in a diverse range, designing effective machine learning algorithms to solve XMC has become a key research challenge.
From the computational aspect of the learning problem, building effective extreme classifiers is faced with a scaling challenge arising due to large number of (i) output labels, (ii) input training instances, and (iii) input features. It is not unlikely to have the dimensionality of the above dataset statistics to be $\mathcal{O}(10^6)$. Two properties of datasets in XMC which pose further statistical challenges, (i) Long-tail distribution of instances among labels, and (ii) Missing labels, are discussed next.

\subsection{Tail Labels} \label{sec:tail}
An important statistical feature of the datasets in XMC is that a large fraction of labels are tail labels, i.e., those which have very few training instances (also referred to as a fat-tailed distribution and Zipf's law). 
Typically the label frequency distribution follows a power law, an example of which is shown in Figure \ref{fig:powerlaw} for a publicly available benchmark WikiLSHTC-325K dataset, consisting of approximately 325,000. 
 
Concretely, let $n_{(r)}$ denote the number of occurrences of the $r$-th ranked label, when ranked in decreasing order of number of training instances that belong to that label, then
\begin{equation}\nonumber
n_{(r)} = n_{(1)}r^{-\beta},
\label{eq:powerlaw1}
\end{equation} 
\begin{figure}[!h]
\scalebox{0.99}{
\centering
\includegraphics[width=0.5\textwidth]{./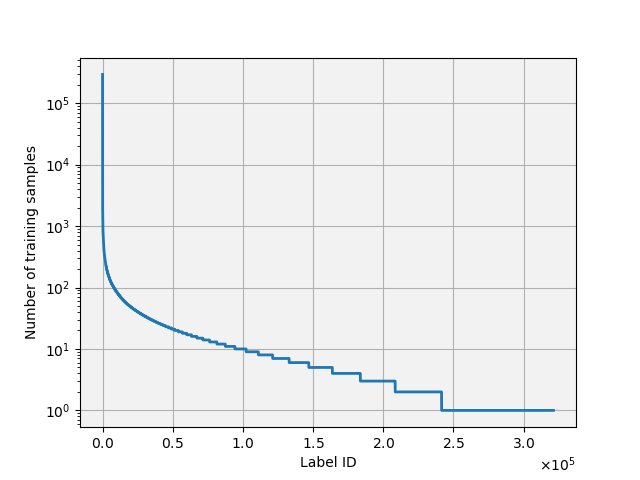}
}
\caption{
Label frequency in WikiLSHTC-325K. X-axis shows the label IDs sorted by the frequency of positive instances and Y-axis gives the actual frequency (on log-scale). 
More than half of the labels have fewer than 10 training instances.
}
\label{fig:powerlaw}
\end{figure}
where $\beta>0$ denotes the exponent of the power law.

Tail labels exhibit diversity of the label space, and contain informative content not captured by the head or torso labels.
Indeed, by predicting well the head labels, yet omitting most of the tail labels, an algorithm can achieve high accuracy \citep{wei2019does}.
Such behavior is not typically desirable in real world applications \citep{babbar2019data}.
In movie recommendation systems, for instance, the head labels correspond to popular blockbusters---most likely, the user has already watched these.
However, the tail of the distribution corresponds to less popular yet equally favored films, like independent movies.
These are the movies that the recommendation engine should ideally focus on.
A similar argument applies to search engine development \citep{radlinski2009redundancy} and hash-tag recommendation in social media \citep{denton2015user}. However, effectively predicting tail-labels can be an enormous challenge due to the extreme data imbalance problem, where a given tail label appears in only a couple of (positive) instances and does not appear in millions of other (negative) training instances.

\subsection{Missing Labels}
In addition to having unfavourable statistics, when learning to classify tail labels it has been shown that one also needs to account for missing labels in the training data.
In a dataset where the labels for each example are chosen from a label space with thousands of elements, it is impossible to explicitly check for the presence or absence of
each label, some examples will have missing labels. Even worse, the chance for a label to be missing is higher for tail labels than for head labels. In the movie example, this
means that there are more people who would have liked an independent movie, but did not because never seeing it, than there are people who would have liked a blockbuster but
never saw it. However, we can typically assume that most people who claim to like a movie actually do so, i.e. that we do not have significant amounts of spurious positive
labels in the training set. This leads to the propensity model introduced in \citep{Jain16}, formally presented in \autoref{sec:theory}.

In \citep{Jain16}, it was shown that it is possible to calculate an unbiased estimate of certain loss functions and evaluation metrics, even if the available data has missing labels.
They further showed that using such an estimate for training results in better performance on tail labels.
However, the analysis left out some important loss functions, such as hinge- and squared-hinge-loss as well as binary cross-entropy. In this paper we derive a way of calculating
unbiased estimates for all loss functions that separate over binary labels. However, it turns out that such estimates may lose important properties of the original loss function,
e.g. an unbiased estimate of the (convex) hinge loss need no longer be convex. Therefore, we provide an alternative derivation, in which we consider the hinge loss as a piecewise-linear
convex upper-bound on the 0-1 loss, and recommend using a piecewise-linear convex upper-bound on the unbiased estimate of the 0-1 loss. 

We show that the derived unbiased estimators, in the form of appropriate weighting factors for loss functions, can be readily incorporated in state-of-the algorithms for extreme classification, and hence easily scale to datasets with hundreds of thousand labels. Empirically, the efficacy of the proposed loss functions is demonstrated by exhibiting superior performance to existing methods, with relative improvements of as much as  20\% compared to some of the recently proposed tree-based methods for extreme classification \citep{Prabhu18b}.

	\section{Theory}
\label{sec:theory}

In the extreme classification setting, it is not possible for a human annotator to
consider every possible label when deciding which labels to assign to a given data point.
Instead, they will look at an example and assign a set of fitting labels that comes to mind.
It is reasonable to assume that any label assigned in such fashion will be correct, i.e. if
the annotator where asked whether the label belonged to the example, they would confirm this.
The converse is not necessarily true: If one were to ask the annotator for each label that
was not chosen whether it was relevant for the example, it is likely that some would be
considered relevant. 

To capture this effect, the notion of propensity is introduced. The propensity of a label
(for an example) is defined as the probability of the label being present, given that when
explicitly asked the ground-truth annotator would confirm it to be present. An empirical
model for estimating propensities from label frequencies is presented in \cite{Jain16}. 

For any given label $i$, we denote with $Y_i \in \set{0, 1}$ whether the label is present
in the annotated dataset, and with $Y_i^* \in \set{0, 1}$ whether it should be present in 
the ground-truth. In the work of \cite{Jain16}, the authors proposed to take into account the missing labels by 
replacing stochastic estimates of the form $f^*(Y)$ by unbiased estimates $g$ s.t. 
$\expect[g(Y)] = \expect[f^*(Y^*)]$. They derived expressions for cases in which 
$y = 0 \implies f^*(y) = 0$ (e.g. P@k), as well as for the hamming loss. In the following,
we extend these results to all loss functions that decompose over labels.

\subsection{Generic Results}
We first derive an unbiased estimator for an arbitrary loss in the case of single-label 
binary classification \autoref{thm:single_label}, and then extend this to all multi-label 
loss functions that separate over labels in Corollary \ref{cor:multi-label}. 
\begin{theorem}
	\label{thm:single_label}
	Let $(\Omega, \mathcal{F}, P)$ be a probability space with RVs
	$\defmap{Y}{\Omega}{\{0, 1\}}$ and $\defmap{Y^{*}}{\Omega}{\{0, 1\}}$. 
	We assume a propensity model as above, i.e. 
	\begin{align}
		\prob{Y=1, Y^{*}=0} &= 0. \label{cond:nofp} \\
		\prob{Y=1 | Y^*=1}  &\eqqcolon p. \label{eq:def_prop}
	\end{align}
	Let $\defmap{l^*}{\{0,1\} \times \real}{\real}$ be a function which can be
	decomposed into
	\begin{align}
		l^*(y, \hat{y}) \eqqcolon 
		\begin{cases}
			\opl(\hat{y}) & y = 1 \\
			\oml(\hat{y}) & y = 0
		\end{cases}. \label{eq:loss-decomp}
	\end{align}
	Then the function $\defmap{l}{\{0,1\} \times \real}{\real}$ defined as
	\begin{align}
	    l_{+}(\hat{y}) &\coloneqq p^{-1} \left(\opl(\hat{y}) + (p-1) \oml(\hat{y}) \right) \label{eq:generic-f+} \\
		l(y, \hat{y}) &= 
		\begin{cases}
			l_{+}(\hat{y}) & y = 1 \\
			\oml(\hat{y}) & y = 0
		\end{cases}
	\end{align}
	allows to calculate an unbiased estimate of $l^*$:
	\begin{align}
		\expect[l^*(Y^*, \hat{y})] = \expect[l(Y, \hat{y})].
	\end{align}
\end{theorem}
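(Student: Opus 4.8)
The plan is to observe that, with $\hat{y}\in\real$ held fixed, both expectations are simply finite weighted sums over the possible values of the binary random variables, so the statement reduces to a short algebraic identity. First I would introduce the shorthand $q \coloneqq \prob{Y^{*}=1}$ and use the propensity assumptions to pin down the marginal law of $Y$: by \eqref{cond:nofp} the event $\set{Y=1}$ coincides, up to a null set, with $\set{Y=1, Y^{*}=1}$, and by \eqref{eq:def_prop} the latter has probability $\prob{Y=1\mid Y^{*}=1}\,\prob{Y^{*}=1} = pq$; hence $\prob{Y=1} = pq$ and $\prob{Y=0} = 1-pq$. The decomposition \eqref{eq:loss-decomp} is what makes this work: it lets us write $l^{*}(Y^{*},\hat{y}) = \opl(\hat{y})\,\ind[Y^{*}=1] + \oml(\hat{y})\,\ind[Y^{*}=0]$, and similarly $l(Y,\hat{y}) = l_{+}(\hat{y})\,\ind[Y=1] + \oml(\hat{y})\,\ind[Y=0]$, turning both expectations into the weighted sums above.

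Next I would expand the right-hand side using the case definition of $l$ together with the marginal probabilities just computed:
\begin{align}
	\expect[l(Y,\hat{y})] = pq\, l_{+}(\hat{y}) + (1-pq)\,\oml(\hat{y}).
\end{align}
Substituting the definition \eqref{eq:generic-f+} of $l_{+}$, the factor $p$ cancels against $pq$, leaving $q\big(\opl(\hat{y}) + (p-1)\oml(\hat{y})\big) + (1-pq)\,\oml(\hat{y})$. Collecting the coefficients of $\oml(\hat{y})$ gives $q(p-1) + (1-pq) = 1-q$, so the right-hand side equals $q\,\opl(\hat{y}) + (1-q)\,\oml(\hat{y})$. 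Finally I would expand the left-hand side directly, $\expect[l^{*}(Y^{*},\hat{y})] = \prob{Y^{*}=1}\,\opl(\hat{y}) + \prob{Y^{*}=0}\,\oml(\hat{y}) = q\,\opl(\hat{y}) + (1-q)\,\oml(\hat{y})$, and observe it matches, which completes the proof.

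There is essentially no hard step here; it is pure bookkeeping, and the only point requiring care is that $p>0$ must be assumed so that $p^{-1}$ in \eqref{eq:generic-f+} is well defined. I would add one remark: although the theorem is stated for a fixed $\hat{y}$, the same computation goes through verbatim after conditioning on $\hat{y}$ when $\hat{y}$ is itself a random prediction, provided $Y$ is conditionally independent of $\hat{y}$ given $Y^{*}$ — the usual feature-independent propensity assumption — which is the form actually used in the downstream applications.
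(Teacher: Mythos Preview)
Your proof is correct. Both you and the paper begin by establishing $\prob{Y=1} = p\,\prob{Y^{*}=1}$ from \eqref{cond:nofp} and \eqref{eq:def_prop}, but you then take a slightly different route: you compute each expectation directly as a two-term weighted sum and verify that both equal $q\,\opl(\hat{y}) + (1-q)\,\oml(\hat{y})$. The paper instead starts from $\expect[l^{*}(Y^{*},\hat{y})]$ and transforms it into $\expect[l(Y,\hat{y})]$ via an importance-sampling step, replacing $\expect[\ind\{Y^{*}=1\}]$ by $\expect[\ind\{Y=1\}\,P^{*}(Y)/P(Y)] = \expect[\ind\{Y=1\}/p]$, and then regroups terms. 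Your direct verification is shorter and arguably cleaner for a result this small; the paper's derivation has the mild advantage of being constructive, making transparent that $l_{+}$ arises precisely as the importance-weighted correction $p^{-1}(\opl - \oml) + \oml$, which motivates the form of \eqref{eq:generic-f+} rather than merely checking it. Your side remarks on the implicit assumption $p>0$ and on conditioning when $\hat{y}$ is random are apt and not made explicit in the paper.
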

\begin{proof}
	Combining the law of total probability with \eqref{cond:nofp} and \eqref{eq:def_prop}
	gives
	\begin{align}
			\prob{Y=1} 
			\detail{&= \detailcolor \prob{Y=1|Y^*=0} \prob{Y^*=0}  \nonumber \\ & \quad + \prob{Y=1|Y^*=1} \prob{Y^*=1}\\} 
			\detail{&= \detailcolor 0 \cdot \prob{Y^*=0} + p \prob{Y^*=1}\\}
			&= p \cdot \prob{Y^*=1}.
	\end{align}
	The decomposition \eqref{eq:loss-decomp} of $l^*$ can be rewritten as
	\begin{align}
		l^*(y, \hat{y}) = \ind\{y=1\} \left(\opl(\hat{y}) - \oml(\hat{y})\right) + \oml(\hat{y})
	\end{align}

	Let $P$ and $P^*$ denote the probability mass functions of $Y$ and $Y^*$. Importance sampling \cite{press2007numerical} lets us change the random variable by introducing the corresponding
	importance factors
	\detail{{
	\detailcolor
	\begin{align}
		\expect[f(Y^*)]
		&= \sum_{y} \left[ P^*(y) f(y) \right] \\
		&= \sum_{y} \left[ P(y) \frac{P^*(y)}{P(y)} f(y) \right] \\
		&= \expect\left[f(Y) \frac{P^*(Y)}{P(Y)}\right]. \label{eq:imp-samp-def}
	\end{align}
	Applied to $l^*$ this results in
	}
	}
	\begin{align}
		\MoveEqLeft \expect[l^*(Y^*, \hat{y})] 
		\detail{= \detailcolor \expect\left[\ind\{Y^*=1\} \left(\opl(\hat{y}) - \oml(\hat{y})\right) + \oml(\hat{y}) \right] \nonumber \\&}
		= \expect\left[\ind\{Y^{*}=1\} \right] \left(\opl(\hat{y}) - \oml(\hat{y})\right) + \oml(\hat{y}) \nonumber \\
		&= \expect\left[\ind\{Y=1\} \frac{P^*(Y)}{P(Y)} \right] \left(\opl(\hat{y}) - \oml(\hat{y}) \right) + \oml(\hat{y}) \nonumber \\
		\detail{&= \detailcolor \expect\left[\ind\{Y=1\} \frac{P^*(1)}{P(1)} \right] \left(\opl(\hat{y}) - \oml(\hat{y}) \right) + \oml(\hat{y}) \nonumber \\}
		&= \expect\left[\ind\{Y=1\} \frac{\opl(\hat{y}) - \oml(\hat{y})}{p} + \oml(\hat{y}) \right],
	\end{align}
	where we have used the indicator function to replace $P^*(Y)/P(Y) = P^*(1)/P(1) = p^{-1}$. 
	Splitting 
	\begin{align}
	\oml(\hat{y}) = (\ind\{Y=1\} + \ind\{Y=0\}) \oml(\hat{y}),
	\end{align}
	we get the desired statement
	\begin{align}
	    \detail{\MoveEqLeft}
		\expect[l^*(Y^*, \hat{y})] 
		\detail{= \detailcolor \expect\left[\ind\{Y=1\} \frac{\opl(\hat{y}) - \oml(\hat{y})}{p} \right. \nonumber \\ & \qquad + \left. (\ind\{Y=1\} + \ind\{Y=0\}) \oml(\hat{y}) \right] \\}
		\detail{&= \detailcolor \expect\left[\ind\{Y=1\} \frac{\opl(\hat{y}) - \oml(\hat{y}) + p\oml(\hat{y})}{p} \right. \nonumber \\ &\qquad + \left. \ind\{Y=0\} \oml(\hat{y}) \right] \\}
		&= \expect\left[\ind\{Y=1\}l_{+}(\hat{y}) + \ind\{Y=0\}\oml(\hat{y}) \right].
	\end{align}
\end{proof}

\begin{corollary}
    \label{cor:multi-label}
	In an extreme classification setting with $N$ labels, where for each label the conditions of
	\autoref{thm:single_label} are fulfilled, any loss function that decomposes over labels $\mathbf{Y} = \set{Y_1, \ldots, Y_N}$ 
	can	be estimated without bias. Let $\defmap{L}{\set{0,1}^N \times \mathds{R}^N}{\mathds{R}}$
	such that
	\begin{align}
		L^*(\mathbf{y}, \mathbf{\hat{y}}) &= \sum_{i=1}^N l^*_i(y_i, \hat{y}_i),
	\intertext{
	and let $l_i$ be related to $l_i^*$ as prescribed by \autoref{thm:single_label}, equation \eqref{eq:generic-f+}. Defining 
	}
		L(\mathbf{y}, \mathbf{\hat{y}}) &\coloneqq \sum_{i=1}^N l_i(y_i, \hat{y}_i),
	\end{align}
	then $L(\mathbf{Y}, \mathbf{\hat{y}})$ is an unbiased estimate of $L^*(\mathbf{Y^*}, \mathbf{\hat{y}})$.
\end{corollary}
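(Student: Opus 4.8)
The plan is to reduce the multi-label claim to $N$ separate applications of \autoref{thm:single_label}, glued together by linearity of expectation. The crucial point — which is what makes the corollary essentially immediate — is that no joint independence among the labels $Y_1, \ldots, Y_N$ is required: linearity of expectation applies to arbitrary, possibly dependent, summands, so the whole argument only ever touches the marginal law of each pair $(Y_i, Y_i^*)$ and never the joint distribution of $\mathbf{Y}$ or $\mathbf{Y^*}$.

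Concretely, I would first expand $\expect[L(\mathbf{Y}, \mathbf{\hat{y}})]$ using $L(\mathbf{y}, \mathbf{\hat{y}}) = \sum_{i=1}^N l_i(y_i, \hat{y}_i)$ and pull the finite sum outside the expectation to get $\sum_{i=1}^N \expect[l_i(Y_i, \hat{y}_i)]$; since each $Y_i$ takes only the values $0,1$ and $\hat{y}_i$ is a fixed real number, $l_i(Y_i, \hat{y}_i)$ is a two-valued (hence integrable) random variable, so this interchange needs no further justification. Then, for each fixed $i$, I would invoke the standing hypothesis that the propensity conditions \eqref{cond:nofp}--\eqref{eq:def_prop} hold for label $i$ (with some propensity $p_i$) and that $l_i$ is built from $l_i^*$ via \eqref{eq:generic-f+}; \autoref{thm:single_label}, applied to $(Y_i, Y_i^*)$ with the fixed prediction $\hat{y}_i$, gives $\expect[l_i(Y_i, \hat{y}_i)] = \expect[l_i^*(Y_i^*, \hat{y}_i)]$. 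Summing these $N$ equalities and folding the finite sum back inside the expectation yields $\sum_{i=1}^N \expect[l_i^*(Y_i^*, \hat{y}_i)] = \expect\big[\sum_{i=1}^N l_i^*(Y_i^*, \hat{y}_i)\big] = \expect[L^*(\mathbf{Y^*}, \mathbf{\hat{y}})]$, which is exactly the asserted unbiasedness.

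There is no real obstacle here; the only thing worth flagging explicitly in the write-up is the independence remark above, so that a reader does not wonder whether correlations between labels break the estimator — they do not. I would also add a one-line remark that if $\mathbf{\hat{y}}$ is allowed to depend on an input $\mathbf{x}$, the identical computation goes through after conditioning on $\mathbf{x}$, provided each per-label propensity is (conditionally) well defined; in the form stated, $\mathbf{\hat{y}}$ is deterministic and nothing more is needed.
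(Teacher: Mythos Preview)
Your proposal is correct and follows essentially the same route as the paper: linearity of expectation to split $L$ into per-label terms, an application of \autoref{thm:single_label} to each label, and summing back. Your extra remarks about not needing independence among labels and about conditioning on $\mathbf{x}$ are valid elaborations but go beyond what the paper's one-line proof records.
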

\begin{proof}
	The proof follows by linearity of expectation.
	\detail{
	{
	\detailcolor
	\begin{align}
	\MoveEqLeft
		\expect[L^*(\mathbf{Y^*}, \mathbf{\hat{y}})] = \expect\left[ \sum_{i=1}^N l^*_i(Y^*_i, \hat{y}_i) \right] 
		\\ &= \sum_{i=1}^N \expect\left[ l^*_i(Y^*_i, \hat{y}_i) \right] 
		= \sum_{i=1}^N \expect\left[ l_i(Y_i, \hat{y}_i) \right] \\
		&= \expect \left[\sum_{i=1}^N  l_i(Y_i, \hat{y}_i) \right] = \expect[L(\mathbf{Y}, \mathbf{\hat{y}})]
	\end{align}
	}
	}
\end{proof}

\subsection{Examples of Loss Functions}
We can now apply \autoref{thm:single_label} to the case of different loss functions.
Due to the Corollary above, it suffices to show the following results for the case 
of a single binary label. Note that for binary labels with predictions in $\hat{y} \in \{0, 1\}$,
squared error, 0-1-loss and Hamming loss are identical.
\paragraph{Squared Error}
The decomposition of 
\begin{align}
L^*(y, \hat{y}) &= (y - \hat{y})^2
\shortintertext{is}
l^*_+(\hat{y}) &= 1 - 2\hat{y} + \hat{y}^2\\
l^*_-(\hat{y}) &= \hat{y}^2,
\end{align}
resulting in
\begin{align}
    l_+(\hat{y}) = \frac{1 - 2\hat{y} + \hat{y}^2 + (p-1)\hat{y}^2}{p} = \frac{1 - 2\hat{y}}{p} + \hat{y}^2 
\end{align}
Combining with $l^*_-$ recovers the result of \cite{Jain16}
\begin{align}
    L(y, \hat{y}) = y \frac{1 - 2\hat{y}}{p} + y \hat{y}^2 + (1-y) \hat{y}^2
    = y \frac{1 - 2\hat{y}}{p} + \hat{y}^2.
\end{align}

\paragraph{Hinge Loss}
This loss is a convex upper-bound on the 0-1 loss, typically defined over $\{-1, +1\}$. 
Therefore, we introduce the notation $z=2y-1$, $\hat{z} = 2\hat{y}-1$, 
such that the hinge loss
is
\begin{align}
	L^*(z, \hat{z}) = \ppart{1 - z \hat{z}}.
\end{align}
This corresponds to 
\begin{align}
    \opl(\hat{z}) &= \ppart{1 - \hat{z}}, \\
    \oml(\hat{z}) &= \ppart{1 + \hat{z}},
\end{align}
and thus 
\begin{align}
	l_+(\hat{z}) = p^{-1} \left(\ppart{1 - \hat{z}} + (p-1) \ppart{1 + \hat{z}} \right).
	\label{eq:non-cvx-hinge}
\end{align}
Therefore, using 
\begin{align}
\ind\{y=1\} &= (z + 1)/2,\\ 
\ind\{y=0\} &= (1-z)/2,
\end{align}
the re-weighted loss becomes
\begin{align}
	L(z, \hat{z}) =& \frac{z+1}{2} \frac{\ppart{1 - \hat{z}} + (p-1)\ppart{1 + \hat{z}}}{p} \nonumber \\ 
	& + \frac{1-z}{2} \ppart{1 + \hat{z}}.
\end{align}
This is not a convex function (see \autoref{fig:hinge-losses-sketch}), and as such might be undesirable from an optimization perspective.

Thus we provide a different view, which instead of calculating an unbiased estimate of the convex upper-bound, calculates
a convex upper-bound of an unbiased estimate of the 0-1 loss. 
For the 0-1 loss, we have
\begin{align}
l^*_+(\hat{z}) &=
\begin{cases}
	1 &  \hat{z} < 0 \\
	0 &  \hat{z} \ge 0
\end{cases}
&
	l^*_-(\hat{z}) &= \begin{cases}
	0 &  \hat{z} < 0 \\
	1 &  \hat{z} \ge 0
\end{cases}
\end{align}
This results in the correction
\begin{align}
l_+(\hat{z}) &=
\begin{cases}
	1/p     &  \hat{z} < 0 \\
	1 - 1/p & \hat{z} \ge 0
\end{cases} \label{eq:rw-01-loss-noshift}
\end{align}
This will give an unbiased estimate of the 0-1-loss in the case of missing
labels, but for interpretability it has the disadvantage that it can be
negative. 
Therefore, we use the following equivalent (for
optimization) loss function
\begin{align}
	\tilde{l}_+(\hat{z}) &=
\begin{cases}
	2/p - 1 &  \hat{z} < 0 \\
	0 & \hat{z} \ge 0
\end{cases}, \label{eq:rw-0-1-loss}
\end{align}
which is just shifted by a constant.

A convex upper-bound of this is given by 
\begin{align}
\tilde{l}_+^{\mathrm{cv}}(\hat{z}) &= \ppart{(2/p - 1)(1-\hat{z})} \\
&= (2/p - 1) \ppart{1-\hat{z}}
\end{align}
Combining with the unchanged negative part we get
\begin{align}
	L_{\mathrm{cv}}(y, \hat{y}) &= \left[ \frac{z+1}{2} \frac{2-p}{p} + \frac{1-z}{2} \right] \ppart{1- z \hat{z}} \nonumber \\
	 &= \frac{z(1-p) + 1}{p} \ppart{1 - z \hat{z}} \label{eq:rw-hinge-loss}
\end{align}

\paragraph{Squared Hinge Loss}
To get the squared hinge loss, instead of bounding \eqref{eq:rw-0-1-loss} piecewise-linearly, we
derive a quadratic upper bound. The cusp of the parabola should be at $\hat{z}=1$,
thus we parameterize as $l_+(\hat{z}) = \alpha\ppart{\hat{z}-1}^2$, 
which leads to $\alpha = 2/p - 1$. Thus the loss function becomes
\begin{align}
	L_{\mathrm{cv}}(y, \hat{y}) &= \left[ \frac{z+1}{2} \frac{2-p}{p} + \frac{1-z}{2} \right] \ppart{1 - z \hat{z}}^2 \nonumber \\
	 &= \frac{z(1-p) + 1}{p} \ppart{1 - z \hat{z}}^2 \label{eq:rw-hinge-sq}
\end{align}
Note that this is a different result from squaring the reweighed hinge loss \eqref{eq:rw-hinge-loss}, which results in a quadratic dependency on the propensity.

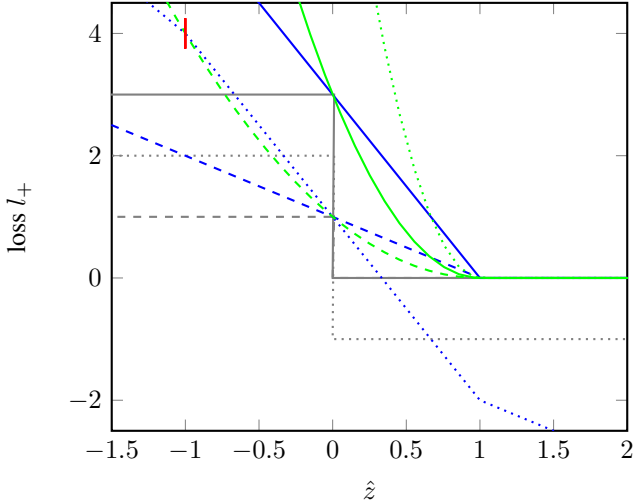
\begin{figure}[t]
    \centering
    \begin{tikzpicture}
	\begin{axis}[
	xlabel={$\hat{z}$},
	ylabel={loss $l_+$},
	xmin=-1.5, xmax=2, ymax=4.5, ymin=-2.5,
	thick
	]
		\addplot[gray,dashed] coordinates{
			(-2, 1)
			(0.01, 1)
			(0, 0)
			(2, 0)
		};
		
		\addplot[gray, dotted] coordinates{
			(-2, 2)
			(0.01, 2)
			(0, -1)
			(2, -1)
		};
		\addplot[gray] coordinates{
			(-2, 3)
			(0.01, 3)
			(0, 0)
			(2, 0)
		};

		\addplot[blue,dashed] coordinates{
			(-2, 3)
			(1, 0)
			(2, 0)
		};

		\addplot[blue] coordinates{
			(-1, 6)
			(1, 0)
			(2, 0)
		};

		\addplot[blue,dotted] coordinates{
			(-2, 6)
			(-1, 4)
			(0, 1)
			(1, -2)
			(2, -3)
		};

		\addplot[green,dashed,samples=100] {(x<=1.0) * ((x-1.0)*(x-1.0)};
		\addplot[green,samples=100] {(x<=1.0) * (3*(x-1.0)*(x-1.0)};
		\addplot[green,dotted,samples=100] {(x<=1.0) * (9*(x-1.0)*(x-1.0)};
		
		\draw[red] (axis cs:-1,4) circle[x radius=0.25, y radius=0.25];
	\end{axis}
\end{tikzpicture}
    \vspace{-2em} 
    \caption{Visualization of 0-1-loss (gray), hinge loss (blue) and squared hinge loss (green) for $y=1$. 
    The dashed lines show the original loss functions without propensity re-weighting. 
    The dotted gray line indicates the re-weighted 0-1-loss without the constant shift \eqref{eq:rw-01-loss-noshift}, the dotted blue line the unbiased, non-convex re-weighted hinge loss
    \eqref{eq:non-cvx-hinge} (the red circle indicates the kink where the non-convexity appears) the dotted green line the square of the convex, re-weighted hinge loss.
    The solid lines indicate \eqref{eq:rw-0-1-loss}, \eqref{eq:rw-hinge-loss} and \eqref{eq:rw-hinge-sq} respectively. 
    }
    \label{fig:hinge-losses-sketch}
\end{figure}

\paragraph{Binary Cross-Entropy}
The BCE loss is given by 
\detaildisplay{l^*_+(\hat{y}) = -\log \hat{y} \detail{\quad} \text{ and } \detail{\quad}  l^*_-(\hat{y}) = -\log( 1-\hat{y}).} 
Plugging into \eqref{eq:generic-f+} gives
\begin{align}
    l_{+}(\hat{y}) = p^{-1} \left(-\log \hat{y} + (1-p) \log (1-\hat{y}) \right)
\end{align}
Combining gives
\begin{align}
    L(y, \hat{y}) &= -\frac{y}{p} \log \hat{y} + \frac{y(1-p) -p+ py }{p} \log( 1-\hat{y}) \nonumber \\
     &= -\frac{y}{p} \log \hat{y} - \left(1 -\frac{y}{p} \right) \log( 1-\hat{y})
\end{align}
This result also follows directly from the fact that the BCE loss is linear in $y$.
\detail{
	{
	\detailcolor
	\begin{align}
	    \MoveEqLeft
		\expect[L^*(Y^*, \hat{y}] = \expect\left[ -Y^* \log \hat{y} - (1 - Y^*) \log( 1-\hat{y}) \right] \nonumber \\
		&= -\expect[Y^*] \log \hat{y} + (1 - \expect[Y^*]) \log( 1-\hat{y}) \\
		&= -\expect[Y/p] \log \hat{y} + (1 - \expect[Y/p]) \log( 1-\hat{y}) \\
		&= -\expect[Y/p] \log \hat{y} + (\expect[Y/p]-1) \log( 1-\hat{y}) \\
		&= \expect\left[- Y/p \log \hat{y} - (1 - Y/p) \log( 1-\hat{y}) \right]
	\end{align}
	}
}

To show that the re-weighted loss functions provide training benefits, we
tested them in a controlled environment. We took the MNIST dataset,
interpreted as a multi-label classification problem, and removed a percentage
of the labels. Compared to the XMC settings presented in \autoref{sec:exps},
this has the advantage that one does not have to use an empirical model to
estimate the propensities. Furthermore, data imbalance can be another
motivator for re-weighting the loss function, and is a prominent feature of
long-tailed extreme classification data sets. The results, in terms of loss
function and classification accuracy, are presented in
\autoref{fig:loss_acc_prop}.

\begin{figure*}[tb]
\begin{tikzpicture}
\begin{groupplot}[group style={group size= 2 by 2,
                ylabels at=edge left,
                yticklabels at=edge left,
				xlabels at=edge bottom,
                xticklabels at=edge bottom,},xmin=0, xmax=1, xlabel=propensity, height=5cm, width=0.5\linewidth]
\nextgroupplot[ylabel=loss,ymin=0, ymax=0.25]
\addplot table[x=propensity,y=BCE] {bce.txt};
\addplot table[x=propensity,y=RWBCE] {bce.txt};
\addplot table[x=propensity,y=NBCE] {bce.txt};
\legend{standard, re-weighted, naive}
\nextgroupplot[ymin=0, ymax=0.25]
\addplot table[x=propensity,y=HINGE] {hinge.txt};
\addplot table[x=propensity,y=RWHINGE] {hinge.txt};
\addplot table[x=propensity,y=NCRWHINGE] {hinge.txt};
\legend{standard, re-weighted, non-convex}
\nextgroupplot[ylabel=accuracy, ymin=0.89, ymax=1]
\addplot table[x=propensity,y=BCE] {bce_binary_accuracy.txt};
\addplot table[x=propensity,y=RWBCE] {bce_binary_accuracy.txt};
\addplot table[x=propensity,y=NBCE] {bce_binary_accuracy.txt};
\nextgroupplot[ymin=0.89, ymax=1]
\addplot table[x=propensity,y=HINGE] {hinge_binary_accuracy.txt};
\addplot table[x=propensity,y=RWHINGE] {hinge_binary_accuracy.txt};
\addplot table[x=propensity,y=NCRWHINGE] {hinge_binary_accuracy.txt};
\end{groupplot}
\end{tikzpicture}
\caption{Comparison of loss functions for training MNIST with missing labels. In this case
the MNIST data is interpreted as a multi-label learning problem, and true labels are dropped
with a pre-selected propensity. A single-hidden-layer relu network is trained using different
loss functions for 10 epochs. The resulting classifier is evaluated on MNIST test-data where 
all labels are present, based on test loss and binary classification accuracy. The plots to
the left show different variations of binary cross-entropy with sigmoid activation, to the right 
the hinge loss.
}
\label{fig:loss_acc_prop}
\end{figure*}
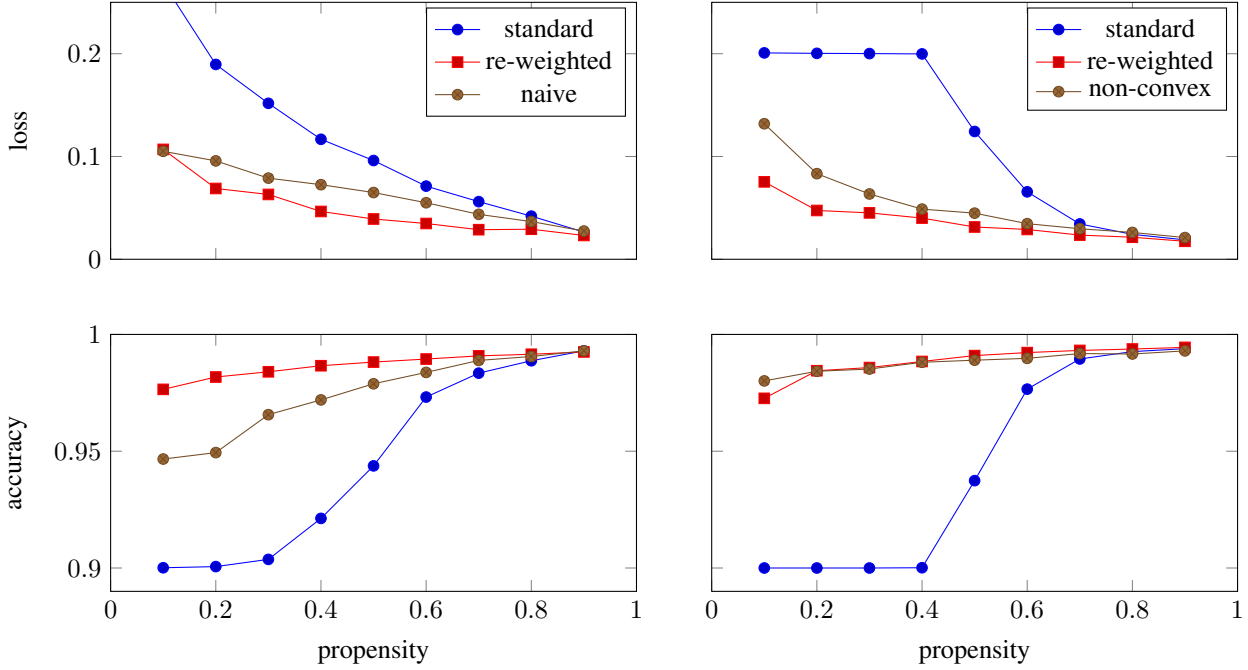

\section{Experimental Analysis}\label{sec:exps}

We evaluate the propensity weighting developed in the previous section on publicly available datasets from the extreme classification repository \citep{Bhatia16}. We use the empirical model of \citep[Section 5]{Jain16} for evaluating the label propensities
\begin{align}
p_{\ell} = (1 + C \exp(-A \log (N_l + B)))^{-1},
\end{align}
which is standard in the community. Here $A$, $B$ and \linebreak $C=(\log N - 1)(B+1)^A$ are dataset dependent parameters, and $N_l$ denotes the number of positives for label $l$. The data is represented as (sparse) bag-of-words features which is suitable for scaling with large-scale linear classification algorithms. Also, these datasets exhibit a long-tailed label distribution (\autoref{fig:powerlaw} for WikiLSHTC-325K dataset). The statistics of the datasets as well as A and B for each dataset are presented in Table~\ref{tbl:datasets}

\begin{table*}
    \centering
	\begin{tabular}{l|r|r|r|r|S[table-format=3.2]|S[table-format=3.2]|S[table-format=3.2]|S[table-format=3.2]}
		\toprule
		\textbf{Dataset} &  \# \textbf{Training } &  \# \textbf{Test } & \# \textbf{Labels} &  \# \textbf{Features}& \textbf{APpL} & \textbf{ALpP} & \textbf{A} & \textbf{B}\\ \midrule
		\textbf{EURLex-4K} &  15,539  & 3,809  & \textbf{3993} & 5,000 & 25.7 & 5.3 & 0.55 & 1.5 \\
\textbf{AmazonCat-13K} &  1,186,239  & 306,782  & \textbf{13,330} & 203,882 & 448.5 & 5.04 & 0.55 & 1.5\\
\textbf{Wikipedia-31K} &  14,146  & 6,616  & \textbf{30,938} & 101,938 & 8.5 & 18.6 & 0.55 & 1.5\\
\textbf{WikiLSHTC-325K} &  1,778,351 & 587,084  & \textbf{325,056} & 1,617,899 & 17.4 & 3.2 & 0.5 & 0.4\\
\textbf{Wikipedia-500K} & 1,813,391 & 783,743  &  \textbf{501,070} &  2,381,304 & 24.7 & 4.7 & 0.5 & 0.4\\
\textbf{Amazon-670K} & 490,499 & 153,025 & \textbf{670,091}  & 135,909 & 3.9 & 5.4 & 0.6 & 2.6\\
		\bottomrule
	\end{tabular}
	\caption{The statistics of the multi-label datasets used in our experiments. APpL denotes the average points per label and ALpP is the average labels per point respectively. $A$ and $B$ refer to the parameters of the propensity model.}
\label{tbl:datasets}
\end{table*}

We apply the weighting scheme as part of a one-vs-rest implementation of linear SVM in multi-label setting, which is also referred to binary relevance in the literature \citep{luaces2012binary}. The weight vector $w_{\ell}$ for each label $\ell$ is learnt by minimizing a combination of squared hinge loss and $l_2$-regularization. Separating the squared hinge loss into the contributions from false negatives and false positives for label $\ell$, this is given by the following optimization problem:
\begin{align}\nonumber
  \min_{w_{\ell}}  \|w_{\ell}\|^2_2 
  & + C_{p_{\ell}}\underbrace{\sum_{\mathclap{i\in \mathcal{L}^+_{\ell}}} \max(0,1-(w_{\ell}^T x_i+b_{\ell}))^2}_{\text{FN}} \\
  & + C_{n_{\ell}}\underbrace{\sum_{\mathclap{i\in \mathcal{L}^-_{\ell}}} \max(0,1+(w_{\ell}^T x_i+b_{\ell}))^2}_{\text{FP}} 
\end{align} \label{pwdismec}
where $\mathcal{L}^+_{\ell}$ ($\mathcal{L}^-_{\ell}$) denotes the set of positive (negative) training samples corresponding to the label ${\ell}$.

The objective function for each label ${\ell}$ consists of two parts, FN (false negatives) and FP (false positives). The FN part penalizes samples that have not been assigned to a relevant label, whereas FP penalizes those that have been mistakenly assigned to an irrelevant label.

According to \eqref{eq:rw-hinge-sq}, we get a convex upper-bound on the 0-1 loss when $C_{p_{\ell}} = \left(\frac{2}{p} -1\right)$ and $C_{n_{\ell}} = 1$. However, minimizing the 0-1 loss may not be the best goal in the XMC setting, due to the presence of tail labels. For those, most samples are negative, and thus most contribution to the loss will be from negatives, and recall might be bad for tail labels. To counteract this, we want to weight FN more strongly for tail labels, compared to head labels. Empirically, we found a weighting of $C_{p_{\ell}} = \left(\frac{1}{p} -1\right)$ to work well.

With the above weighting factors, the optimization in equation \eqref{pwdismec} generalizes the objective function optimized by one-vs-rest based extreme classification algorithm \texttt{DiSMEC} \citep{dismec}, where both the weights are set to 1. We thus call our approach \textbf{P}ropensity \textbf{W}eighted DiSMEC (\texttt{PW-DiSMEC}).

\subsection{Evaluation metrics} 
For evaluation of the methods, we use precision at $k$ ($\p{k}$) and normalized discounted cumulative gain at $k$ ($\ndcgatk$), which are standard and commonly used metric for evaluating models in XMC. For each test sample with observed ground truth label vector $\mathbf{y} \in \set{0,1}^\numlabels$ and predicted vector $\mathbf{\hat{y}} \in \real^\numlabels$, $\p{k}$ and $\ndcgatk$ are computed as
{
\setlength{\abovedisplayskip}{0.5\abovedisplayskip}
\setlength{\belowdisplayskip}{0.5\belowdisplayskip}
\begin{align}
\p{k}(\textbf{y},\yhatv) \coloneqq &\frac{1}{k} \sum_{{\ell} \in \rank_k(\yhatv)} \textbf{y}_{\ell}, \label{eq:patk}\\
\ndcgatk(\textbf{y},\yhatv) \coloneqq & \frac{\dcg{k}}{\sum_{l=1}^{\min(k, \|\textbf{y}\|_0)}{\frac{1}{\log(l+1)}}}, \label{eq:ndcg}\\
\dcg{k}(\textbf{y},\yhatv) \coloneqq& \, \sum_{\mathclap{\quad{\ell} \in \rank_k(\yhatv)}} \left({\log(l+1)}\right)^{-1} \mathbf{y}_l,
\end{align}
where $\rank_k(\mathbf{\hat{y}})$ returns the $k$ largest indices of $\mathbf{\hat{y}}$.
}


 With applications of XMC arising in recommendation systems and web-advertising, the objective of an algorithm in this domain is to correctly recommend/advertise among the top-k slots. Hence evaluation metrics such as precision@$k$ and its rank-sensitive version \ndcgatk are commonly used to benchmark extreme classification algorithms. A collection of results from recent papers on datasets in \autoref{tbl:datasets} for algorithms developed over the last few years is given on the extreme classification repository \citep{Bhatia16}.
 
For the case of missing labels, and imbalanced classification arising from the presence of tail-labels, propensity scored variants of $\p{k}$ and \ndcgatk are of main interest to us in this work and are given by :
 \begin{align}
 \psp{k}(\textbf{y},\yhatv)  &\coloneqq  \frac{1}{k} \sum_{\ell \in \rank_k{(\yhatv)}} \frac{\yv_\ell}{p_\ell} \label{eq:propp} \\
 \mathrm{PSnDCG}@k(\textbf{y},\yhatv) & \coloneqq \frac{\mathrm{PSDCG}@k}{\sum_{\ell=1}^{\min(k, \|\yv\|_0)}{\frac{1}{\log(\ell+1)}}}
 \label{eq:prop} \\
 \mathrm{PSDCG}@k(\textbf{y},\yhatv) &\coloneqq \sum_{\ell \in \rank_k{\!(\mathbf{\hat{y}})}}{\frac{\mathbf{y}_\ell}{p_\ell\log(\ell+1)}}.
 \end{align}

 To match against the best possible performance attainable by any system, as suggested in \citep{Jain16}, we define,
  for $M$ test samples, $\mathbb{G}(\{\yhatv\}) = \frac{-1}{M}\sum_{i=1}^{M}\mathbb{L}(\yhatv_i,\textbf{y}_i)$, where $\mathbb{L}(\cdot, \cdot)$ and $\mathbb{G}(\cdot)$ signify loss and gain respectively. We use 
  $100*\mathbb{G}(\{\yhatv\})/\mathbb{G}(\{\textbf{y}\})$ as the performance metric. 
 The loss $\mathbb{L}(\cdot, \cdot)$ can take two forms, (i) $\mathbb{L}(\yhatv_i,\textbf{y}_i) =  - \mathrm{PSnDCG}@k$, and (ii) $\mathbb{L}(\yhatv,\textbf{y}) =  - \psp{k}$.
 This leads to the two metrics which are finally used in our comparison in \autoref{tbl:psp} (denoted $\psp{k}$ and $\mathrm{PSnDCG}@k$), and evaluated for $k=1,3,5$.

\begin{table*}
\centering
\small
\begin{tabular}{l|c|cc|cc|ccc}
\toprule
\multirow{2}{*}{Dataset} & \multicolumn{1}{c}{Ours} & \multicolumn{2}{|c|}{Embedding~based} & \multicolumn{2}{c|}{Tree~based} &  \multicolumn{3}{c}{Linear one-vs-rest}  \\
& \texttt{PW-DiSMEC} & \texttt{SLEEC} & \texttt{LEML} &  \texttt{PfXML} & \texttt{Prbl}  & \texttt{(P)PD} & \texttt{DiSMEC} & \texttt{ProXML} \\ \midrule
\textbf{EURLex-4K} &&& & & & & &\\
~~\textit{PSP@1} & 44.9 & 35.4 & 24.1 & 39.9 & 37.7 & 38.2 & 41.2 & \textbf{45.2}\\
~~\textit{PSP@3} & \textbf{49.2} & 39.8 & 27.2 & 43.0 & 44.7 & 42.7 & 45.4 & 48.2\\
~~\textit{PSP@5} & \textbf{52.2} & 42.7 & 29.1 & 44.5 & 48.8 & 44.8 & 49.3 & 51.0\\
~~\textit{PSnDCG@3} & \textbf{48.0} & 38.8 & 26.4 & 42.2 & 43.4 & 40.9 & 44.3 & 47.5\\
~~\textit{PSnDCG@5} & \textbf{50.1} & 40.3 & 27.7 & 43.2 & 46.1 & 42.8 & 46.9 & 49.1\\

\textbf{AmazonCat-13K} &&&&& & & &\\
~~\textit{PSP@1} & 67.1 & 46.8 & - & \textbf{68.0} & 50.9 & 49.6 & 51.4 & 52.3\\
~~\textit{PSP@3} & 71.3 & 58.5 & - & \textbf{72.3} & 64.0 & 61.6 & 61.0 &  62.0\\
~~\textit{PSP@5} & 73.2 & 65.0 & - & \textbf{74.6} & 72.1 & 68.2 & 65.9 & 66.2\\
~~\textit{PSnDCG@3} & 70.4 & 55.1 & - & \textbf{71.1} & 35.2 & 58.3 & 65.2 & 66.1\\
~~\textit{PSnDCG@5} & 72.4 & 60.1 & - & \textbf{72.6} & 38.1 & 62.7 & 68.8 & 69.5\\

\textbf{Wikipedia-31K} &&& & & & & &\\
~~\textit{P@1} & \textbf{15.6} & 11.1 & 9.4 & 9.4 & 11.7 & - & 13.4 & 13.6\\
~~\textit{P@3} & \textbf{16.2} & 11.9 & 10.1 & 9.9 & 12.7 & - & 12.9 & 13.0\\
~~\textit{P@5} & \textbf{17.3} & 12.4 & 10.6 & 10.4 & 13.7 & - & 13.6 &  13.2\\
~~\textit{PSnDCG@3} & \textbf{16.0} & 11.7 & 9.9 & 9.7 & 12.5 & - & 13.2 & 13.4\\
~~\textit{PSnDCG@5} & \textbf{16.8} & 12.1 & 10.2 & 10.1 & 13.1 & - & 13.6 & 13.7\\


\textbf{WikiLSHTC-325K} &&& & & & & &\\
~~\textit{PSP@1} & \textbf{35.3} & 20.5 & 3.4 & 25.4 & 28.7 & 28.3 & 29.1 & 34.8\\
~~\textit{PSP@3} & \textbf{38.4} & 23.3 & 3.7 & 26.8 & 35.0 & 33.5 & 35.6 & 37.7\\
~~\textit{PSP@5} & \textbf{41.7} & 25.2 & 4.2 & 28.3 & 38.6 & 36.6 & 39.4 & 41.0\\
~~\textit{PSnDCG@3} & \textbf{39.8} & 22.4 & 3.6 & 26.4 & 35.2 & 31.9 & 35.9 & 38.7\\
~~\textit{PSnDCG@5} & \textbf{42.6} & 23.5 & 3.9 & 27.2 & 38.1 & 33.6 & 39.4 & 41.5\\ 

\textbf{Wikipedia-500K} &&& & & & & &\\
~~\textit{PSP@1} & \textbf{33.9} & 21.1 & 3.2 & 22.2 & 28.8 & - & 31.2 & 33.1\\
~~\textit{PSP@3} & \textbf{36.7} & 21.0 & 3.4 & 21.3 & 31.9 & - & 33.4 & 35.0\\
~~\textit{PSP@5} & \textbf{39.9} & 20.8 & 3.5 & 21.6 & 34.6 & - & 37.0 & 39.4\\
~~\textit{PSnDCG@3} & \textbf{39.7} & 20.9 & 3.1 & 21.6 & 31.2 & - & 33.7 & 35.2\\
~~\textit{PSnDCG@5} & \textbf{42.7} & 23.1 & 3.3 & 21.8 & 35.5 & - & 37.1 & 39.0\\ 

\textbf{Amazon-670K} &&& & & & & &\\
~~\textit{PSP@1} & \textbf{30.9} & 20.6 & 2.0 & 27.1 & 27.6 & 26.6 & 27.8 & 30.8\\
~~\textit{PSP@3} & \textbf{33.1} & 23.3 & 2.2 & 28.2 & 31.0 & 30.7 & 30.6 & 32.8\\
~~\textit{PSP@5} & \textbf{35.2} & 26.0 & 2.4 & 29.3 & 34.1 & 34.7 & 34.2 & 35.1\\
~~\textit{PSnDCG@3} & \textbf{31.9} & 22.6 & 2.2 & 27.9 & 28.4 & - & 28.8 & 31.7\\
~~\textit{PSnDCG@5} & \textbf{32.9} & 24.4 & 2.3 & 28.6 & 29.9 & - & 30.7 & 32.6\\

\bottomrule
\end{tabular}

\caption{ Comparison of PSP@k and PSnDCG@k on benchmark datasets for $k=1,3\text{ and }5$. 
The columns \texttt{PfXML}, \texttt{Prbl}, \texttt{(P)PD} show the results for \texttt{PfastXML}, \parabel, and \texttt{(P)PD-Sparse} respectively.
For each row, score of the best performing algorithm is highlighted in bold. 
Entries marked "-" imply the corresponding method could not scale to the particular dataset (LEML for AmazonCat-13K and PD-Sparse for Wikipedia-500K) or the scores are unavailable (PDSparse for Wikipedia-31K). The comparison of the models in terms of vanilla precision and nDCG can be found in \autoref{fig:vanilla_plots_fig}.}
\label{tbl:psp}
\end{table*}

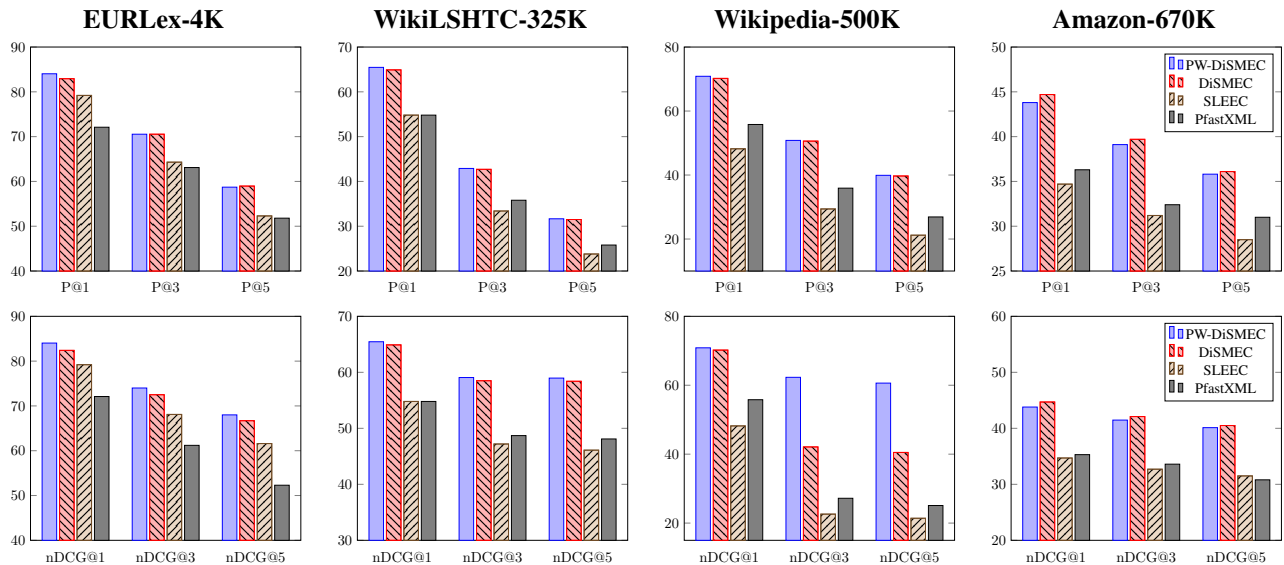
\begin{figure*}[bt]
  \centering
  \begin{tabular}{cccc}
    \textbf{EURLex-4K} & \textbf{WikiLSHTC-325K} & \textbf{Wikipedia-500K} & \textbf{Amazon-670K} \\
    \begin{tikzpicture}[scale=0.55]
      \begin{axis}[
        major x tick style = transparent,
        height=7cm,
        ybar,
        enlarge x limits=0.25,
        symbolic x coords={\p{1},\p{3},\p{5}},
        xtick = data,
        ymin=40, ymax=90, 
        legend style={at={(0.97,0.97)},anchor=north east}]
        \addplot coordinates{ (\p{1}, 84.03) (\p{3}, 70.54) (\p{5}, 58.72)};
        \addplot +[postaction={pattern=north west lines}] coordinates{ (\p{1}, 82.93) (\p{3}, 70.54) (\p{5}, 58.96)};
        \addplot +[postaction={pattern=north east lines}] coordinates{ (\p{1}, 79.2) (\p{3}, 64.3) (\p{5}, 52.3)};
        \addplot coordinates{ (\p{1}, 72.1) (\p{3}, 63.1) (\p{5}, 51.8)};
        
      \end{axis}
    \end{tikzpicture}  &
                         \begin{tikzpicture}[scale=0.55]
                           \begin{axis}[
                             major x tick style = transparent,
                             height=7cm,
                             ybar,
                             enlarge x limits=0.25,
                             symbolic x coords={\p{1},\p{3},\p{5}},
                             xtick = data,
                             ymin=20, ymax=70, 
                             legend style={at={(0.03,0.97)},anchor=north west}]
                             \addplot coordinates{ (\p{1}, 65.46) (\p{3}, 42.90) (\p{5}, 31.66)};
                             \addplot +[postaction={pattern=north west lines}] coordinates{ (\p{1}, 64.9) (\p{3}, 42.7) (\p{5}, 31.5)};
                             \addplot +[postaction={pattern=north east lines}] coordinates{ (\p{1}, 54.8) (\p{3}, 33.4) (\p{5}, 23.8)};
                             \addplot coordinates{ (\p{1}, 54.8) (\p{3}, 35.8) (\p{5}, 25.8)};
                             
                           \end{axis}
                         \end{tikzpicture}&
                                            \begin{tikzpicture}[scale=0.55]
                                              \begin{axis}[
                                                major x tick style = transparent,
                                                height=7cm,
                                                ybar,
                                                enlarge x limits=0.25,
                                                symbolic x coords={\p{1},\p{3},\p{5}},
                                                xtick = data,
                                                ymin=10, ymax=80, 
                                                legend style={at={(0.03,0.97)},anchor=north west}]
                                                \addplot coordinates{ (\p{1}, 70.86) (\p{3}, 50.82) (\p{5}, 39.91)};
                                                \addplot +[postaction={pattern=north west lines}] coordinates{ (\p{1}, 70.2) (\p{3}, 50.6) (\p{5}, 39.7)};
                                                \addplot +[postaction={pattern=north east lines}] coordinates{ (\p{1}, 48.2) (\p{3}, 29.4) (\p{5}, 21.2)};
                                                \addplot coordinates{ (\p{1}, 55.8) (\p{3}, 35.9) (\p{5}, 26.9)};
                                                
                                              \end{axis}
                                            \end{tikzpicture} & 
                                                        
                                                          \begin{tikzpicture}[scale=0.55]
                                        \begin{axis}[
                                                major x tick style = transparent,
                                                height=7cm,
                                                ybar,
                                                enlarge x limits=0.25,
                                                symbolic x coords={\p{1},\p{3},\p{5}},
                                                xtick = data,
                                                ymin=25, ymax=50, 
                                                legend style={at={(0.97,0.97)},anchor=north east}]
                                                \addplot coordinates{ (\p{1}, 43.80) (\p{3}, 39.11) (\p{5}, 35.81)};
                                                \addplot +[postaction={pattern=north west lines}] coordinates{ (\p{1}, 44.7) (\p{3}, 39.7) (\p{5}, 36.1)};
                                                \addplot +[postaction={pattern=north east lines}] coordinates{ (\p{1}, 34.7) (\p{3}, 31.2) (\p{5}, 28.5)};
                                                \addplot coordinates{ (\p{1}, 36.3) (\p{3}, 32.4) (\p{5}, 31.0)};
        
                                            \legend{PW-DiSMEC, DiSMEC, SLEEC, PfastXML}
                                        \end{axis}
                                    \end{tikzpicture}\\

    \begin{tikzpicture}[scale=0.55]
      \begin{axis}[
        major x tick style = transparent,
        height=7cm,
        ybar,
        enlarge x limits=0.25,
        symbolic x coords={\ndcg{1},\ndcg{3},\ndcg{5}},
        xtick = data,
        ymin=40, ymax=90, 
        legend style={at={(0.03,0.97)},anchor=north west}]
        \addplot coordinates{ (\ndcg{1}, 84.03) (\ndcg{3}, 74.00) (\ndcg{5}, 68.00)};
                            \addplot +[postaction={pattern=north west lines}] coordinates{ (\ndcg{1}, 82.4) (\ndcg{3}, 72.5) (\ndcg{5}, 66.7)};
                            \addplot +[postaction={pattern=north east lines}] coordinates{ (\ndcg{1}, 79.2) (\ndcg{3}, 68.1) (\ndcg{5}, 61.6)};
                            \addplot coordinates{ (\ndcg{1}, 72.1) (\ndcg{3}, 61.2) (\ndcg{5}, 52.3)};
        
      \end{axis}
    \end{tikzpicture} &
                        \begin{tikzpicture}[scale=0.55]
                          \begin{axis}[
                            major x tick style = transparent,
                            height=7cm,
                            ybar,
                            enlarge x limits=0.25,
                            symbolic x coords={\ndcg{1},\ndcg{3},\ndcg{5}},
                            xtick = data,
                            ymin=30, ymax=70, 
                            legend style={at={(0.03,0.97)},anchor=north west}]
                            \addplot coordinates{ (\ndcg{1}, 65.46) (\ndcg{3}, 59.07) (\ndcg{5}, 58.97)};
                            \addplot +[postaction={pattern=north west lines}] coordinates{ (\ndcg{1}, 64.9) (\ndcg{3}, 58.5) (\ndcg{5}, 58.4)};
                            \addplot +[postaction={pattern=north east lines}] coordinates{ (\ndcg{1}, 54.8) (\ndcg{3}, 47.2) (\ndcg{5}, 46.1)};
                            \addplot coordinates{ (\ndcg{1}, 54.8) (\ndcg{3}, 48.7) (\ndcg{5}, 48.1)};
                            
                          \end{axis}
                        \end{tikzpicture} & 
                                            \begin{tikzpicture}[scale=0.55]
                                              \begin{axis}[
                                                major x tick style = transparent,
                                                height=7cm,
                                                ybar,
                                                enlarge x limits=0.25,
                                                symbolic x coords={\ndcg{1},\ndcg{3},\ndcg{5}},
                                                xtick = data,
                                                ymin=15, ymax=80, 
                                                legend style={at={(0.03,0.97)},anchor=north west}]
                                                \addplot coordinates{ (\ndcg{1}, 70.86) (\ndcg{3}, 62.30) (\ndcg{5}, 60.62)};
                                                \addplot +[postaction={pattern=north west lines}] coordinates{ (\ndcg{1}, 70.2) (\ndcg{3}, 42.1) (\ndcg{5}, 40.5)};
                                                \addplot +[postaction={pattern=north east lines}] coordinates{ (\ndcg{1}, 48.2) (\ndcg{3}, 22.6) (\ndcg{5}, 21.4)};
                                                \addplot coordinates{ (\ndcg{1}, 55.8) (\ndcg{3}, 27.2) (\ndcg{5}, 25.1)};
                                                
                                              \end{axis}
                                            \end{tikzpicture} & 
                                                \begin{tikzpicture}[scale=0.55]
                                              \begin{axis}[
                                                major x tick style = transparent,
                                                height=7cm,
                                                ybar,
                                                enlarge x limits=0.25,
                                                symbolic x coords={\ndcg{1},\ndcg{3},\ndcg{5}},
                                                xtick = data,
                                                ymin=20, ymax=60, 
                                                legend style={at={(0.97,0.97)},anchor=north east}]
                                                \addplot coordinates{ (\ndcg{1}, 43.80) (\ndcg{3}, 41.47) (\ndcg{5}, 40.12)};
                                                \addplot +[postaction={pattern=north west lines}] coordinates{ (\ndcg{1}, 44.7) (\ndcg{3}, 42.1) (\ndcg{5}, 40.5)};
                                                \addplot +[postaction={pattern=north east lines}] coordinates{ (\ndcg{1}, 34.7) (\ndcg{3}, 32.7) (\ndcg{5}, 31.5)};
                                                \addplot coordinates{ (\ndcg{1}, 35.3) (\ndcg{3}, 33.6) (\ndcg{5}, 30.8)};
                                                
                                                \legend{PW-DiSMEC, DiSMEC, SLEEC, PfastXML}
                                              \end{axis}
                                            \end{tikzpicture}
  \end{tabular}
  \caption{Comparison of P@k (top row) and  nDCG@k (bottom row) on benchmark datasets. The comparison of the models in terms of the propensity scored metrics can be found in \autoref{tbl:psp}.}
  \label{fig:vanilla_plots_fig}
\end{figure*}
\subsection{Baseline methods}
We compare the proposed method against seven state-of-the-art algorithms from three classes of XMC methods:
\setlist{nolistsep}
\begin{enumerate}
    \item Label-embedding methods:
    \begin{itemize}
        \item \texttt{LEML} \citep{yu2014large} - Uses a regularized least squares objective for learning to embed the label vectors to a lower-dimensional linear subspace, map the feature vectors to the lower dimensional label space, and decompress the lower dimensional labels by a linear transformation.
        \item \texttt{SLEEC} \citep{bhatia2015sparse} - 
        Embeds the label vectors to a lower-dimensional space in which nearest neighbors are preserved, learns a regressor to map the feature vectors to the embedded space, and performs kNN over the regressor for decompression.
    \end{itemize}
    
    \item Tree-based methods:
    \begin{itemize}
        \item \texttt{PFastXML} \citep{Jain16} - The instances are recursively partitioned into two nodes, with the objective of maximizing a propensity scored metric in each node.
        \item \texttt{Parabel} \citep{Prabhu18b} - Partitions the labels into two balanced groups using 2-means and learns a one-vs-rest classifier at each node.
    \end{itemize}
    
    \item One-vs-all methods:
    \begin{itemize}
        \item \texttt{(P)PD-Sparse} \citep{yenpd, yen2017ppdsparse} - These class of extreme classification algorithms exploit the sparsity in the primal and dual problem combined with elastic net regularization. \texttt{PD-Sparse} uses multi-class hinge loss while \texttt{PPD-Sparse} uses hinge loss for binary classification. Being amenable to distributed training, \texttt{PPD-Sparse} can scale to bigger datasets.
        \item \texttt{DiSMEC} \citep{dismec} - Optimizes Hamming loss with $\ell_2$ regularization in a distributed environment which achieves state-of-the-art on vanilla $\p{k}$. DiSMEC prunes weights for model size reduction.
        \item \texttt{ProXML} \citep{babbar2019data} - Improves tail-label detection by posing the learning problem as an instance of robustness optimization. It proposes to guard against small perturbations in the feature composition of the instances of the same class, leading to $\ell_1$ regularization. Its key difference compared to our method is that it does not weigh the hinge loss but treats the regularization part.
    \end{itemize}
    
\end{enumerate}

\subsection{Experimental Results}

\paragraph{Performance on propensity-scored metrics} A comparison of the proposed method with the baselines in terms of the propensity scored precision is presented in \autoref{tbl:psp}. The proposed method outperforms the embedding-based methods \texttt{LEML} and \texttt{SLEEC} on all benchmarks.
 
In comparison to tree-based methods, our method achieves better results on five out of six datasets. For WikiLSHTC-325K and Wikipedia-500K, the improvements in $\psp{k}$ over \texttt{PfastXML}, in which the objective is to maximize a propensity scored metric, are about 10\% in absolute terms. Over \texttt{Parabel}, the relative improvement is almost 20-30\% on most datasets.

It is clear that the proposed method outperforms \texttt{PD-Sparse} and vanilla \texttt{DiSMEC} on all the datasets. The proposed method which is based on weighting the squared hinge loss function and keeping the regularization as $l_2$ also outperforms \texttt{ProXML}, which is specifically designed for better tail-label detection. Furthermore, being amenable to second order optimization scheme, it is almost two orders of magnitude faster to train than \texttt{ProXML}.

\paragraph{Performance on vanilla metrics} While the proposed method aims at mitigating the impact of missing label and data-imbalance on the performance on tail labels, it is desirable that this should not come at an expense of accuracy in vanilla metrics, in equations (\ref{eq:patk}) \& (\ref{eq:ndcg}). \autoref{fig:vanilla_plots_fig} illustrates a comparison of the proposed method and the baselines in terms of precision and nDCG. In the interest of space, the results are compared to three other baseline methods. As can be seen, the proposed method works at par or even in better in most cases on these metrics as well.

\section{Other Related work}
Various works in XMC can be broadly divided into five main categories as follows :
\begin{enumerate}

\item \textit{Tree-based} : Tree-based methods implement a divide-and-conquer paradigm and scale to large label sets in XMC by partitioning the labels space. As a result, these scheme of methods have the computational advantage of enabling faster training and prediction. Apart from those discussed in previous sections, some of the other methods include recent works ~\citep{jasinska2016extreme,majzoubi2019ldsm, wydmuch2018no}. However, tree-based methods suffer from error propagation in the tree cascade, and perform particularly worse on metrics which are sensitive for tail-labels. An approach based on shallower trees which mitigates the effect of error cascading has been recently demonstrated in ~\citep{khandagale2019bonsai}.
Approaches based on decision trees have also been proposed for multi-label classification and those tailored to XMC regime \citep{si2017gradient}.

\item \textit{Label embedding} : Label-embedding approaches assume that, despite large number of labels, the label matrix is effectively low rank and therefore project it to a low-dimensional sub-space.
These approaches have been at the fore-front in multi-label classification for small scale problems with few tens or hundred labels \citep{hsu2009multi, weston2011wsabie, linmulti}. 
In some of the works, it was argued that the low rank embedding may be insufficient for capturing the label diversity in XMC settings (\citep{xurobust, bhatia2015sparse, tagami2017annexml}), which has been questioned in the recent work \citep{guo2019breaking} .

\item \textit{One-vs-rest} : As the name suggests, these methods learn a classifier per label which distinguishes it from rest of the labels. 
In terms of prediction accuracy, these methods have been shown to be among the best performing ones for XMC~\citep{dismec,yen2017ppdsparse, babbar2019data}. However, due to their reliance on a distributed training framework, it remains challenging to employ them in resource constrained environments.

\item \textit{Deep learning} : Deeper architectures on top of word-embeddings have also been explored in recent works. A convolutional network based approach, \texttt{XML-CNN}, for deep extreme multi-label classification was proposed in \citep{liu2017deep}. Recently, successful application of attention mechanism, originally motivated from a machine translation perspective, has been demonstrated in the recent work \texttt{AttentionXML} \citep{you2019attentionxml}. For dense feature vectors, \texttt{AttentionXML} performs the best among deep learning approaches significantly improving over \texttt{XML-CNN}. \texttt{X-Bert}, an approach based on pre-trained Bert language model (\citep{devlin2018bert}) has been presented in the work \citep{chang2019modular}. 

\item \textit{Negative Sampling based methods} : Recently, there has also been a surge of methods in designing efficient negative sampling based methods for extreme classification \citep{reddi2019stochastic, jain2019slice, Bamler2020Extreme}. The primary goal of these algorithms is to avoid computing the loss over all the samples which do not belong a given label, and hence speed up training without any significant loss in prediction accuracy.
\end{enumerate}
\section{Conclusion}
\label{sec:conclusion}
Towards modeling the missing labels and extreme data imbalance when learning with number of labels, we presented an derived and analyzed unbiased loss functions which decompose over the individual labels, which includes the commonly used variants such as hinge- and squared-hinge-loss and binary cross-entropy loss. For the setting of XMC, the suitability of the estimator for squared-hinge-loss function was demonstrated by incorporating in the one-vs-rest implementation of large-scale linear SVM. Empirically, it was observed that the resulting method outperforms all existing baselines on benchmark datasets.

\FloatBarrier
\bibliography{LSHTC-biblio}
\bibliographystyle{icml2020}

\end{document}